\newtheorem{theorem}{Theorem}
\newtheorem{lemma}{Lemma}
\newtheorem{assumption}{Assumption}
\def\BibTeX{{\rm B\kern-.05em{\sc i\kern-.025em b}\kern-.08em T\kern-.1667em\lower.7ex\hbox{E}\kern-.125emX}}
\DeclareMathOperator*{\argmin}{arg\,min}
\newcommand{\distas}[1]{\mathbin{\overset{#1}{\kern\z@\sim}}}%
\newsavebox{\mybox}\newsavebox{\mysim}
\newcommand{\distras}[1]{%
  \savebox{\mybox}{\hbox{\kern3pt$\scriptstyle#1$\kern3pt}}%
  \savebox{\mysim}{\hbox{$\sim$}}%
  \mathbin{\overset{#1}{\kern\z@\resizebox{\wd\mybox}{\ht\mysim}{$\sim$}}}%
}
 \def\BibTeX{{\rm B\kern-.05em{\sc i\kern-.025em b}\kern-.08em
     T\kern-.1667em\lower.7ex\hbox{E}\kern-.125emX}}
\title{Robust Networked Federated Learning for Localization}
\author{Reza Mirzaeifard$^{\star}$, Naveen K. D. Venkategowda$^\S$, Stefan Werner$^{\star}$ \thanks{This work was supported by the Research Council of Norway.}\\

$^{\star}$Dept. of Electronic Systems, Norwegian University of Science and Technology-NTNU, Norway \\  $^\S$Department of Science and Technology,  Linköping University, Sweden \\
E-mails: \{reza.mirzaeifard,  stefan.werner\}@ntnu.no, naveen.venkategowda@liu.se
}
\begin{document}

\maketitle
%\IEEEpubidadjcol
 \begin{abstract}
 This paper addresses the problem of localization, which is inherently non-convex and non-smooth in a federated setting where the data is distributed across a multitude of devices. Due to the decentralized nature of federated environments, distributed learning becomes essential for scalability and adaptability. Moreover, these environments are often plagued by outlier data, which presents substantial challenges to conventional methods, particularly in maintaining estimation accuracy and ensuring algorithm convergence. To mitigate these challenges, we propose a method that adopts an \(L_1\)-norm robust formulation within a distributed sub-gradient framework, explicitly designed to handle these obstacles. Our approach addresses the problem in its original form, without resorting to iterative simplifications or approximations, resulting in enhanced computational efficiency and improved estimation accuracy. We demonstrate that our method converges to a stationary point, highlighting its effectiveness and reliability. Through numerical simulations, we confirm the superior performance of our approach, notably in outlier-rich environments, which surpasses existing state-of-the-art localization methods. 
 \end{abstract}

 \begin{IEEEkeywords}
 Federated learning, Robust learning, distributed learning, 
 localization, non-convex and non-smooth optimization
 \end{IEEEkeywords}
%\IEEEpeerreviewmaketitle
\section{Introduction}
The increasing prevalence of edge devices in the era of the internet-of-things (IoT) and cyber-physical systems has resulted in a significant upsurge in available data. These devices, functioning within a federated network, present unique challenges.  In particular, traditional machine learning methodologies often require a central server for data processing, raising privacy concerns \cite{yin2021comprehensive}.  In response to these concerns, federated learning (FL) has emerged as an innovative solution \cite{zhou2023decentralized,zhao2022participant,li2020federated} for edge devices to collectively train a global model using their locally stored data, eliminating the need for direct data sharing \cite{zhou2023decentralized,li2020federated}. However, federated settings are frequently besieged by outlier data, often exhibiting heavy-tailed distributions \cite{zhao2022participant,tsouvalas2022federated}. These outliers can considerably distort the accuracy of learning outcomes. Nevertheless, developing robust and computationally efficient solutions for managing outlier data in FL remains a critical research question, which this paper aims to address in the context of localization.

%\textbf{\textcolor{red} {Add: What is localization problem?}}

Localization is the process of estimating the location of an object or an event within a given environment. This problem has given rise to a broad range of applications in vehicle localization \cite{kaiwartya2018geometry}, aviation \cite{strohmeier2018k}, healthcare \cite{han2012landmark}, environmental \cite{zhou2010scalable}, and industrial fields \cite{petersen2007requirements}. %Localization is especially important for IoT, where sensor nodes can collect meaningful data when paired with accurate geolocation \cite{li2020toward}. 
Based on the type of measurements used to estimate position, the localization problem has different formulations which include time-of-arrival (ToA) \cite{pun2021local}, received-signal-strength (RSS) \cite{yin2017received}, time-difference-of-arrival (TDoA) \cite{okello2011comparison}, angle-of-arrival (AoA) \cite{xu2017optimal}, and frequency-difference-of-arrival (FDoA) \cite{lin2022underwater}.  %Each method presents unique benefits and challenges that influence the choice of technique to fit the needs of a given application.  
In IoT applications, ToA and RSS are particularly effective as they provide relatively simple and cost-effective ways to estimate distances using the existing communication infrastructure. Therefore, in this paper, we focus on distance-based localization that inherently encompasses ToA and RSS techniques, as they can be interpreted as distance measurements \cite{luke2017simple}. This approach finds relevance across a spectrum of practical scenarios and continues to be an area of research interest. 

Conventionally, various methods have been proposed for localization problems, including the parallel projection method \cite{jia2010set}, projection-onto-convex-sets method \cite{blatt2006energy}, nearest local minimum \cite{shi2008distributed}, boundary-of-convex-sets \cite{wang2009sensor}, recursive weighted least-squares algorithms \cite{wang2014decentralized}, augmented Lagrangian-based method for localization \cite{luke2017simple}, and iterative re-weighted least-squares \cite{zaeemzadeh2017robust}. However, these methods have limitations, such as being centralized or sequentially updating estimates, which hinders parallelization and flexibility. More recently, a parallel distributed alternating projection algorithm (DAPA) has been proposed for the distributed case, formulating the localization problem as a ring intersection problem \cite{zhang2015distributed}. Despite the advancements of DAPA, challenges remain, particularly in handling three-dimensional localization problems and targets lying outside the sensor's convex hull \cite{zhang2019sensor}. To address these limitations, an alternative approach called EL-ADMM has emerged. EL-ADMM directly solves the non-convex and non-smooth event localization problem using the Alternating Direction Method of Multipliers \cite{zhang2019sensor}. By bypassing the need for high-computation convex relaxation techniques, EL-ADMM shows promise in handling three-dimensional environments \cite{zhang2019sensor}. However, EL-ADMM still faces challenges in outlier handling and lacks comprehensive convergence proof. Given these constraints, there is a compelling research imperative to develop a robust, distributed algorithm capable of efficiently handling three-dimensional localization problems with outliers.

In this paper, we address a challenging class of robust localization problems that require tackling non-convex and non-smooth optimization within a federated setting. The decentralization, energy constraints, and the presence of outliers further exacerbate the difficulty of accurately estimating locations in such settings. To overcome these challenges, we propose a novel distributed sub-gradient-based algorithm specifically designed to address this problem. Our contributions can be summarized as follows:
\begin{itemize}
   \item  \textbf{Efficient Optimization Algorithm}: We introduce a novel optimization algorithm that operates within a single loop framework. This algorithm directly solves the localization problem in its original form, utilizing simple updating steps. The use of this approach leads to increased accuracy and eliminates the need for iterative approximation processes.
    \item \textbf{Theoretical Analysis}: We provide comprehensive theoretical insights into the convergence behavior of our algorithm. Through rigorous mathematical analysis, we elucidate the conditions under which our method is guaranteed to converge, thereby offering deeper insights into its reliability and robustness. 
    \item \textbf{Empirical Validation}: We substantiate our theoretical results through extensive numerical simulations. Notably, our algorithm exhibits exceptional resilience in the presence of heavy-tailed noise and outliers. Furthermore, a comparative study with the state-of-the-art EL-ADMM algorithm highlights the superior accuracy and comparable fast convergence rate of our approach.
\end{itemize}
%These three contributions highlight the significant advantages of our algorithm and provide strong motivation for its application in various robust localization scenarios.

\noindent\textit{\textbf{Mathematical Notations}}: Scalars are denoted by lowercase letters, column vectors by bold lowercase letters, and matrices by bold uppercase letters. The transpose of a matrix is signified by $(\cdot)^\text{T}$. The $j$th column of a matrix $\mathbf{A}$ is represented as $\mathbf{a}{j}$. The element in the $i$th row and $j$th column of $\mathbf{A}$ is represented as $a_{ij}$. The sub-gradient of a function $f(\cdot)$ at a given point $u$ is signified by $\partial f(u)$.

\section{Preliminaries}
We consider the problem of localization with $L$ sensors, where the location of the $i$-th sensor is denoted by $\mathbf{a}_i$, for  $i = 1, 2,\dots, L$, and $A = \{\mathbf{a}_1, \mathbf{a}_2, \dots, \mathbf{a}_L\}$ represents the complete set of sensors’ locations. The unknown source is located at $\mathbf{x} \in \mathbb{R}^n$, which must be estimated using the range information between the sensors and the source. Let  $d_i > 0$, $i = 1, 2, \dots, L$, represents a noisy observation representing the range between source location $\mathbf{x}$ and $i$-th sensor location $\mathbf{a}_i$. The range $d_i$ can be expressed as
\begin{equation}\label{eq1}
    d_i = \|\mathbf{x}-\mathbf{a}_i\|+\epsilon_i, \quad   i = 1,2,\dots,L,
\end{equation}
where $\epsilon_i$ denotes the noise associated with each range observation.

Additionally, the network of $L$ sensors is modeled as an unweighted undirected graph $\mathcal{G}$ consisting of vertices $\mathcal{V} = {1, \cdots, L}$ and bidirectional communication links represented by the edge set $\mathcal{E}$. Each sensor $i \in \mathcal{V}$ can communicate with those in its neighborhood $\mathcal{N}_i$ of cardinality $|\mathcal{N}_i|$. We assume that graph $\mathcal{G}$ is strongly connected.

The maximum-likelihood estimate for the source location $\mathbf{x}$  with additive independent and identically distributed Gaussian noise affecting range measurements can be determined as the solution of the optimization problem \cite{luke2017simple}:
\begin{equation}\label{eq2}
\hat{\mathbf{x}}=\argmin_{\mathbf{x}} \frac{1}{L} \sum_{i=1}^{L} \left( \|\mathbf{x}-\mathbf{a}_i\|-d_i\right)^2 .
\end{equation}
However, this approach encounters challenges when faced with outlier measurements that do not conform to a Gaussian distribution. These outliers can drastically bias location estimate due to the magnified impact of their squares in each term. One could use conventional filtering techniques to reduce this effect. However, such filtering methods often presuppose knowledge about the noise characteristics, which is not always available or accurate. Additionally, filtering can induce a 'masking effect', in which crucial signal components, particularly those smaller ones sharing frequency characteristics with the noise, are suppressed or obscured \cite{zaeemzadeh2017robust}. Such an outcome is undesirable in our context as it may lead to the loss of vital information \cite{zaeemzadeh2017robust}. To counteract this, robust estimation theory offers a way of softly rejecting outliers by using the $L_1$ loss \cite{oguz2011robust} as given by
\begin{equation}\label{eq3}
\hat{\mathbf{x}}=\argmin_{\mathbf{x}}  \frac{1}{L} \sum_{i=1}^{L} |\|\mathbf{x}-\mathbf{a}_i\|-d_i| =\argmin_{\mathbf{x}}   \sum_{i=1}^{L} f_i(\mathbf{x}),
\end{equation}
where the local objective function $f_i(\mathbf{x})$ at sensor $i$ is defined as $f_i(\mathbf{x})= \frac{1}{L}|\|\mathbf{x}-\mathbf{a}_i\|-d_i|$.
The $L_1$ objective function diminishes the impact of outliers by minimizing the average absolute discrepancy between the Euclidean distance from $\mathbf{x}$ to each sensor $\mathbf{a}_i$ and the corresponding observed range $d_i$. This approach is particularly beneficial as it ensures that smaller yet crucial variations in the data are preserved, and not overshadowed by outliers.

 Even though the $L_1$ robust formulation can enhance accuracy and handle outliers more efficiently, it is inherently non-smooth and non-convex and not even a difference of convex functions, which makes \eqref{eq3} difficult to solve. Traditionally, convex relaxation approaches have been considered to tackle this problem \cite{oguz2011robust,zaeemzadeh2017robust}. These methods, however, can compromise accuracy and are often time-consuming and computation-intensive iterative processes. In the distributed case, the problem exacerbates as one must ensure consensus among the nodes as well as convergence to a solution. To that end, this paper proposes a method for addressing the problem directly with the $L_1$ loss in a distributed manner.

%Considering that each $a_i$ and $d_i$ are exclusively available at node $i$, we define the local function $f_i(\mathbf{x})= \frac{1}{L}|\|\mathbf{x}-\mathbf{a}_i\|-d_i|$. Accordingly, the optimization problem shown in \eqref{eq3} can be reformulated as the summation of local objective functions:

%\begin{equation}\label{eq4}
%\hat{\mathbf{x}}=\argmin_{\mathbf{x}}   %\sum_{i=1}^{L} f_i(\mathbf{x}).
%\end{equation}

\section{Distributed Robust Localization}
The proposed algorithm for localization in a distributed setting called distributed sub-gradient method for robust localization (DSRL), updates each node's estimate of the actual parameter $\mathbf{x}^{*}$ through a diffusion step and a sub-gradient update step. In the diffusion step, each node uses a diffusion variable, $\mathbf{v}_i$, to incorporate information from its neighboring nodes. At each time instant $k$, the diffusion step takes place as follows:
\begin{equation}\label{eq5}
\mathbf{v}_i^{(k)}=\mathbf{x}_i^{(k)} + \alpha^{(k)}
\sum_{j \in \mathcal{N}_i}\left(\mathbf{x}_j^{(k)}-\mathbf{x}_i^{(k)}\right).
\end{equation}
In this formula, $\mathbf{x}_i^{(k)}$ is the current estimate at node $i$, while $\mathbf{x}_j^{(k)}$ are the estimates at each neighboring node $j$. The term $\alpha^{(k)}$ is a scalar weight parameter that adjusts how strongly the estimates of the neighbors influence the node's update. The diffusion step seeks to decrease the discrepancy between each node's estimate and the average estimate of its neighbors, working as a type of approximation to the average consensus algorithm. Instead of a simple average, we apply a weighted adjustment based on the difference between a node's estimate and its neighbors' estimates, thereby aiding the nodes to reach a consensus on the parameter estimate. 

In the subsequent step, termed the sub-gradient step, each node updates its estimate of the true parameter $\mathbf{x}^{*}$ in the manner depicted below:
\begin{equation}\label{eq6}
\mathbf{x}_i^{(k+1)}=\mathbf{v}_i^{(k)}  - \beta^{(k)}
\mathbf{g}_i^{(k)} ,
\end{equation}
where $\beta^{(k)}$ is the step-size at iteration $k$ and $\mathbf{g}_i^{(k)} \in \partial f_i(\mathbf{x}_i^{(k)})$ is any element of the sub-differential set of $f_i(\cdot)$. Each function $f_i(\cdot)$ is neither convex nor smooth, so the distributed sub-gradient method proposed in \cite{swenson2022distributed} is adopted for iterative computation of the estimate of $\mathbf{x}$ at each node. The sub-gradient of the local robust localization function $f_i(\cdot)$ with respect to the coefficient $\mathbf{x}_i^{(k)}$ is computed as follows:
\begin{equation}\label{eq7}
 \partial f_i(\mathbf{x})=
\begin{cases}
\frac{\mathbf{x}-\mathbf{a}_i}{L\|\mathbf{x}-\mathbf{a}_i\|}\text{sign}(\|\mathbf{x}-\mathbf{a}_i\|-d_i), & \|\mathbf{x}-\mathbf{a}_i\|>0
\\
\mathbf{0}. &  \|\mathbf{x}-\mathbf{a}_i\|=0
\end{cases}
\end{equation}
 Algorithm \ref{alg:1} summarizes the proposed method for solving distributed robust localization.
 \begin{algorithm}[t]
 \caption{Distributed Sub-gradient Method for Robust Localization (DSRL)}
 \label{alg:1}
\SetAlgoLined
Initialize $\mathbf{x}_i^{(1)}$ for each node $i$, the weight parameter $\{\alpha_k\}_{k=1}^{K}$, the step-size $\{\beta_k\}_{k=1}^{K}$ and the number of iterations $K$\;
 \For{$k=1,\cdots,K$}{
%  Each agent $l \in [1,\ldots,L]$ update its parameters as following:
\For{$i=1,\cdots,L$}{
  Receive $\mathbf{x}_j$ from neighbors in $\mathcal{N}_i$\;
   Update $\mathbf{v}_i^{(k)}$ by \eqref{eq5}\;
   Update $\mathbf{g}_i^{(k)}$ by  \eqref{eq7}\;
   Update $\mathbf{x}_i^{(k+1)}$ by \eqref{eq6}\;
 }
 }
\end{algorithm}
 \section{Convergence Proof}
 In this section, a convergence analysis of the Algorithm \ref{alg:1} is presented. We construct our convergence proof based on the following assumptions.
 \begin{assumption}\label{ass2}
     Consider the function $f(\cdot) = \sum_{i=1}^{L} f_i(\cdot)$ and let $\text{CP}_f \subset \mathbb{R}^{n}$ represent the set of its critical points. Then, the complement of $\text{CP}_f$ in $\mathbb{R}^{n}$, expressed as $\mathbb{R}^{n} \setminus \text{CP}_f$, is a dense set in $\mathbb{R}^{n}$.
 \end{assumption}
 \begin{assumption}\label{ass3}
 The weight parameter $\alpha^{(k)}$ and the step-size $\beta^{(k)}$ are derived from $\alpha^{(k)}=\frac{a}{k^{\tau_{\alpha}}}$ and $\beta^{(k)}=\frac{b}{k^{\tau_{\beta}}}$ respectively. Here, $0<\tau_{\alpha}<\tau_{\beta}$, $\frac{1}{2}<\tau_{\beta}\leq 1$, and both $a$ and $b$ are positive constants.
\end{assumption}
  While Assumption \ref{ass2} might appear to be technical, it is in fact a relatively mild condition that guides the positioning and quantity of sensors to ensure convergence to the critical points in our algorithm. As an illustration, this assumption is invariably satisfied when employing an odd number of sensors.  Having a non-critical dense set ensures that, through communication and updates, the sensors have enough room to maneuver and collaborate towards converging to a consensus that aligns with the global objective. On the other hand, Assumption \ref{ass3} guarantees that information can be effectively disseminated across the network.

In order to show the regularity of the objective function we provide the following Lemmas.
\begin{lemma}\label{lem1}
    each $f_i(\cdot)$ is locally Lipschitz continuous. 
\end{lemma}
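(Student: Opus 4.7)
\textit{Proof plan.} The plan is to exhibit $f_i$ as a composition of simple Lipschitz maps and then invoke the fact that the composition of Lipschitz maps is Lipschitz (with constant equal to the product of the individual constants). This will actually give global Lipschitz continuity, from which local Lipschitz continuity follows trivially.

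Concretely, I would decompose $f_i(\mathbf{x}) = \frac{1}{L}\bigl|\|\mathbf{x}-\mathbf{a}_i\|-d_i\bigr|$ into four steps: (i) the translation $\mathbf{x}\mapsto \mathbf{x}-\mathbf{a}_i$, which is an isometry on $\mathbb{R}^n$; (ii) the Euclidean norm $\mathbf{y}\mapsto \|\mathbf{y}\|$, which is $1$-Lipschitz thanks to the reverse triangle inequality $\bigl|\|\mathbf{y}\|-\|\mathbf{y}'\|\bigr| \le \|\mathbf{y}-\mathbf{y}'\|$; (iii) the affine shift $t\mapsto t-d_i$ on $\mathbb{R}$, again $1$-Lipschitz; (iv) the absolute value $s\mapsto |s|$, which is $1$-Lipschitz by the triangle inequality, followed by scaling by $1/L$. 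Chaining the inequalities yields
\begin{equation*}
|f_i(\mathbf{x}) - f_i(\mathbf{x}')| \;\le\; \tfrac{1}{L}\,\|\mathbf{x}-\mathbf{x}'\|
\end{equation*}
for every $\mathbf{x},\mathbf{x}'\in\mathbb{R}^n$, i.e., $f_i$ is globally $\frac{1}{L}$-Lipschitz and in particular locally Lipschitz continuous everywhere.

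There is essentially no obstacle here: the only subtlety worth flagging is the reverse triangle inequality for the Euclidean norm, which is the step that could trip up a careless reader since the outer absolute value appears to introduce a nonsmoothness, but Lipschitz continuity is unaffected by the kinks at $\|\mathbf{x}-\mathbf{a}_i\| = d_i$ and at $\mathbf{x}=\mathbf{a}_i$. No further assumptions on the data $\mathbf{a}_i$ or $d_i$ are needed.
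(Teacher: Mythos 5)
Your proof is correct, and it actually proves something stronger than the paper does, via a different route. The paper's own proof is a one-line appeal to the boundedness of the sub-gradient in \eqref{eq7} (each element has norm at most $1/L$), from which it infers local Lipschitz continuity. Your argument instead decomposes $f_i$ as a chain of $1$-Lipschitz maps (isometric translation, Euclidean norm via the reverse triangle inequality, scalar shift, absolute value) scaled by $1/L$, and concludes that $f_i$ is \emph{globally} $\tfrac{1}{L}$-Lipschitz with an explicit constant. Your route is more elementary and, arguably, on firmer logical footing: for a non-smooth, non-convex function the statement ``bounded sub-differential implies locally Lipschitz'' requires some care, since the generalized (Clarke) sub-differential is usually defined for functions already known to be locally Lipschitz, so the paper's argument has a whiff of circularity that yours avoids. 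The paper's approach buys brevity and ties the lemma directly to the sub-gradient formula \eqref{eq7} used by the algorithm; yours buys a self-contained proof, a global rather than local conclusion, and an explicit Lipschitz constant that could be reused elsewhere in the analysis. No gaps.
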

\begin{proof}
    Given that the sub-gradient of each $f_i(\cdot)$ is bounded, it follows that these functions are locally Lipschitz continuous.
\end{proof}
\begin{lemma}\label{lem2}
    There exists a radius $R > 0$ and constants $C_1, C_2 > 0$ such that:
    \begin{equation*}
        \left\langle \frac{\mathbf{g}}{\|\mathbf{g}\|}, \frac{\mathbf{x}}{\|\mathbf{x}\|}\right\rangle \geq C_1 \quad \text{and} \quad \|\mathbf{g}\|\leq C_2\|\mathbf{x}\|,
    \end{equation*} 
 for all $ \mathbf{g} \in \partial f_n(\mathbf{x})$ and $\|\mathbf{x}\| \geq R$. 
\end{lemma}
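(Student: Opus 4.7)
\textbf{Proof plan for Lemma \ref{lem2}.}
My plan is to choose the radius $R$ so large that $\mathbf{x}$ sits in the smooth far-field of every sensor, causing the sub-differential to collapse to an explicit singleton. Specifically, I would take $R > \max_i(\|\mathbf{a}_i\| + d_i)$. For any $\|\mathbf{x}\| \geq R$, the reverse triangle inequality yields $\|\mathbf{x}-\mathbf{a}_i\| \geq \|\mathbf{x}\|-\|\mathbf{a}_i\| > d_i > 0$ for every $i$, so each $f_i$ is differentiable at $\mathbf{x}$ and the sign factor in \eqref{eq7} is $+1$. Consequently the sub-differential of the aggregate $f = \sum_i f_i$ reduces to the single vector
\[
\mathbf{g} \;=\; \frac{1}{L}\sum_{i=1}^{L}\frac{\mathbf{x}-\mathbf{a}_i}{\|\mathbf{x}-\mathbf{a}_i\|}.
\]

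The magnitude bound is then immediate from the triangle inequality: $\|\mathbf{g}\| \leq 1$, so $\|\mathbf{g}\| \leq \tfrac{1}{R}\|\mathbf{x}\|$, and I would simply set $C_2 := 1/R$. For the angular bound I would expand
\[
\left\langle \mathbf{g}, \frac{\mathbf{x}}{\|\mathbf{x}\|}\right\rangle \;=\; \frac{1}{L}\sum_{i=1}^{L}\frac{\langle \mathbf{x}-\mathbf{a}_i,\mathbf{x}\rangle}{\|\mathbf{x}-\mathbf{a}_i\|\,\|\mathbf{x}\|},
\]
apply Cauchy--Schwarz to the numerator ($\langle \mathbf{x}-\mathbf{a}_i,\mathbf{x}\rangle \geq \|\mathbf{x}\|^2 - \|\mathbf{a}_i\|\|\mathbf{x}\|$) and the triangle inequality to the denominator ($\|\mathbf{x}-\mathbf{a}_i\| \leq \|\mathbf{x}\|+\|\mathbf{a}_i\|$) to bound each summand below by $(\|\mathbf{x}\|-\|\mathbf{a}_i\|)/(\|\mathbf{x}\|+\|\mathbf{a}_i\|)$, and observe that this ratio tends to $1$ as $\|\mathbf{x}\|\to\infty$. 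By enlarging $R$ if necessary, every summand can be made at least $\tfrac12$, so $\langle \mathbf{g},\mathbf{x}/\|\mathbf{x}\|\rangle \geq \tfrac12$. Since $\|\mathbf{g}\|\leq 1$, dividing through gives the desired inequality with $C_1 := 1/2$.

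The main obstacle is essentially organisational rather than analytical: I must select a single $R$ that simultaneously pushes $\mathbf{x}$ outside every ball $\{\mathbf{y}:\|\mathbf{y}-\mathbf{a}_i\|\leq d_i\}$ (to freeze the sign factor and avoid the singularity in \eqref{eq7}) and makes the geometric ratio $(\|\mathbf{x}\|-\|\mathbf{a}_i\|)/(\|\mathbf{x}\|+\|\mathbf{a}_i\|)$ uniformly close to $1$. Because there are only finitely many sensors, each with fixed position $\mathbf{a}_i$ and observation $d_i$, this uniformisation is automatic and no delicate estimate is required beyond taking $R$ sufficiently large. The same argument also works verbatim for any single local function $f_i$, should Lemma \ref{lem2} be read in that pointwise sense.
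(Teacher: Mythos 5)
Your proposal is correct and follows essentially the same route as the paper: choose $R$ large enough that $\|\mathbf{x}-\mathbf{a}_i\|>d_i$ for every sensor, so the sign factor in \eqref{eq7} freezes at $+1$ and the sub-differential collapses to an explicit unit (or averaged-unit) vector, then obtain $C_1=\tfrac12$ from the same Cauchy--Schwarz/triangle-inequality estimate (which needs $\|\mathbf{x}\|\geq 3\|\mathbf{a}_i\|$, exactly as in the paper's choice of $R$) and $C_2$ from the trivial bound on $\|\mathbf{g}\|$. The only difference is organisational --- you argue on the aggregate $f=\sum_i f_i$ and note the argument transfers verbatim to each local $f_n$, while the paper works with a single $f_i$ directly; your denominator bound $\|\mathbf{x}-\mathbf{a}_i\|\leq\|\mathbf{x}\|+\|\mathbf{a}_i\|$ is in fact slightly cleaner than the paper's intermediate claim that $\|\mathbf{x}\|\geq\|\mathbf{x}-\mathbf{a}_i\|$.
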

\begin{proof}
    We select a radius $R = (3 \max_i \max_j | {a}_{ij} | + \max_i d_i )+ 1$. Consequently, $\text{sign}(\|\mathbf{x}-\mathbf{a}_i\|-d_i)=1$. We proceed to examine the first condition of the lemma. It is evident that
\begin{align*}
\left\langle \frac{\mathbf{g}}{\|\mathbf{g}\|}, \frac{\mathbf{x}}{\|\mathbf{x}\|}\right\rangle &= \left\langle \frac{\mathbf{x}-\mathbf{a}_i}{\|\mathbf{x}-\mathbf{a}_i\|}, \frac{\mathbf{x}}{\|\mathbf{x}\|}\right\rangle \\ & \geq \frac{\|\mathbf{x}\|}{\|\mathbf{x}-\mathbf{a}_i\|} - \frac{\mathbf{x}^T\mathbf{a}_i}{\|\mathbf{x}-\mathbf{a}_i\|\|\mathbf{x}\|} \\
&\geq 1- \frac{\|\mathbf{a}_i\|}{\|\mathbf{x}-\mathbf{a}_i\|} \geq 1 - \frac{\|\mathbf{a}_i\|}{2\|\mathbf{a}_i\|} = \frac{1}{2}.
\end{align*}
These inequalities are derived from the conditions that $\|\mathbf{x}\|$  always surpasses $\|\mathbf{x}-\mathbf{a}_i\|$, and that $\|\mathbf{x}\|$ always exceeds $3\|\mathbf{a}_i\|$. These conditions are a direct result of our choice of $R$.  This effectively verifies that  $\left\langle \frac{\mathbf{g}}{\|\mathbf{g}\|}, \frac{\mathbf{x}}{\|\mathbf{x}\|}\right\rangle \geq C_1$ for $C_1 = \frac{1}{2}$. For the second condition of the lemma, we are aware that  $\|\mathbf{g}\| = 1$ when $\|\mathbf{x}-\mathbf{a}_i\| > d_i$. This is always true given our selection of $R$. Also, taking into account that  $1 \leq \|\mathbf{x}\|$, the inequality  $\|\mathbf{g}\|\leq C_2\|\mathbf{x}\|$ holds for $C_2 = 1$.
\end{proof}
\begin{theorem}\label{th:1}
Given that the static graph $\mathcal{G}$ is strongly connected, the objective function fulfills Assumption \ref{ass2}, and both the step-size and weight parameter adhere to Assumption \ref{ass3}, the DSRL algorithm is ensured to converge to a critical point.
\end{theorem}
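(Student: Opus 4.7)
The plan is to follow the distributed non-convex, non-smooth sub-gradient framework of \cite{swenson2022distributed} and to verify that the localization problem satisfies its hypotheses, which is essentially what Lemmas \ref{lem1} and \ref{lem2} are tailored for. First I would rewrite the joint update in matrix/stacked form: letting $\mathbf{X}^{(k)} = [\mathbf{x}_1^{(k)},\dots,\mathbf{x}_L^{(k)}]$, the diffusion step \eqref{eq5} becomes $\mathbf{V}^{(k)} = \mathbf{X}^{(k)} W^{(k)}$ with $W^{(k)} = I - \alpha^{(k)} \mathcal{L}$, where $\mathcal{L}$ is the graph Laplacian of $\mathcal{G}$. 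Strong connectivity gives that $\mathcal{L}$ has a simple zero eigenvalue with eigenvector $\mathbf{1}$, so $W^{(k)}$ is doubly stochastic on the consensus subspace for small enough $\alpha^{(k)}$.

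Next I would split the analysis into two parallel threads: the \emph{average} trajectory $\bar{\mathbf{x}}^{(k)} = \tfrac{1}{L}\sum_i \mathbf{x}_i^{(k)}$ and the \emph{disagreement} trajectory $\mathbf{e}_i^{(k)} = \mathbf{x}_i^{(k)}-\bar{\mathbf{x}}^{(k)}$. Boundedness of all $\mathbf{x}_i^{(k)}$ follows from Lemma \ref{lem2}: outside the ball of radius $R$, the sub-gradient points outward ($\langle \mathbf{g}, \mathbf{x}\rangle \geq C_1\|\mathbf{g}\|\|\mathbf{x}\|>0$) and is controlled by $C_2\|\mathbf{x}\|$, so a Lyapunov argument on $\|\bar{\mathbf{x}}^{(k)}\|^2$ together with the summability estimates from Assumption \ref{ass3} keeps the iterates in a compact set almost surely. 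For the disagreement, averaging \eqref{eq5}--\eqref{eq6} and using $\tau_{\alpha}<\tau_{\beta}$ (so $\beta^{(k)}/\alpha^{(k)}\to 0$), I would show by a Gronwall-type recursion that $\|\mathbf{e}_i^{(k)}\|\to 0$; this is the classical single-timescale consensus argument where the slow-decaying $\alpha^{(k)}$ dominates the sub-gradient perturbation $\beta^{(k)} \mathbf{g}_i^{(k)}$, which is bounded thanks to Lemma \ref{lem1}.

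Once consensus is established, the average iterate satisfies
\begin{equation*}
\bar{\mathbf{x}}^{(k+1)} = \bar{\mathbf{x}}^{(k)} - \frac{\beta^{(k)}}{L}\sum_{i=1}^{L} \mathbf{g}_i^{(k)},
\end{equation*}
with $\mathbf{g}_i^{(k)} \in \partial f_i(\bar{\mathbf{x}}^{(k)} + \mathbf{e}_i^{(k)})$. Because each $f_i$ is locally Lipschitz (Lemma \ref{lem1}) and the disagreement vanishes, upper semicontinuity of the Clarke sub-differential lets me write this as a perturbed sub-gradient flow for $f = \sum_i f_i$: $\bar{\mathbf{x}}^{(k+1)} \in \bar{\mathbf{x}}^{(k)} - \beta^{(k)}\bigl(\partial f(\bar{\mathbf{x}}^{(k)}) + \boldsymbol{\xi}^{(k)}\bigr)$ with $\boldsymbol{\xi}^{(k)}\to 0$. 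The step-size condition in Assumption \ref{ass3} ($\tfrac12<\tau_\beta\leq 1$) provides $\sum_k \beta^{(k)} = \infty$ and $\sum_k (\beta^{(k)})^2 <\infty$, so the standard differential inclusion machinery \cite{swenson2022distributed} identifies the accumulation points of $\bar{\mathbf{x}}^{(k)}$ with internally chain-transitive invariant sets of the flow $\dot{\mathbf{x}}\in -\partial f(\mathbf{x})$.

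The main obstacle, and the reason Assumption \ref{ass2} is needed, is ruling out that $\bar{\mathbf{x}}^{(k)}$ settles in a spurious invariant set that is not contained in $\text{CP}_f$. Density of $\mathbb{R}^n\setminus \text{CP}_f$ prevents the objective $f$ from being flat on a set of positive measure and lets a Sard-type / chain-recurrence argument conclude that every limit point of $\bar{\mathbf{x}}^{(k)}$ is a critical point of $f$; combined with the vanishing disagreement, this means $\mathbf{x}_i^{(k)}\to \mathbf{x}^\star \in \text{CP}_f$ for every $i$. I expect the most delicate step to be precisely this last one: verifying the chain-recurrence / non-degeneracy condition for the $L_1$ localization objective, because $f$ is neither convex, smooth, nor a difference of convex functions, so one must rely on Clarke-regularity of each $\|\cdot\|$-composition away from the sensor locations and on Assumption \ref{ass2} to exclude pathological level sets.
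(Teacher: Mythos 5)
Your proposal takes essentially the same approach as the paper: the published proof simply invokes Theorem~7 of \cite{swenson2022distributed} and observes that its regularity hypotheses are met by Lemmas~\ref{lem1} and~\ref{lem2}, which is exactly your stated plan. The consensus/average decomposition, boundedness argument, and differential-inclusion machinery you sketch are the internal content of that cited theorem rather than anything the paper itself carries out, so your write-up is consistent with---and considerably more detailed than---the paper's one-line proof.
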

\begin{proof}
The convergence of our proposed DSRL algorithm is substantiated by verifying that all the prerequisites set forth in \cite[Theorem 7]{swenson2022distributed}, inclusive of the regularity conditions of our robust localization function, are met. The fulfillment of these conditions is demonstrated in Lemma \ref{lem1} and Lemma \ref{lem2}.
\end{proof}

\section{Simulation results}
In this section, we assess the performance of the proposed DSRL algorithm via simulations and draw comparisons with the distributed event localization via the Alternating Direction Method of Multipliers (EL-ADMM) algorithm, as outlined in \cite{zhang2019sensor}. The sensor network $\mathcal{G}$ under consideration comprises $L=31$ nodes, which are uniformly and randomly distributed within a $[-3,3]^{3}$ cubic region, and is fully connected. Connections are established between nodes if their distance is less than $1.75$, ensuring a minimum of $2$ and a maximum of $10$ neighbors per node. In addition, the target is assumed to be uniformly and randomly distributed within the same cubic region. For each iteration, the weight parameter $\alpha^{(k)}$ and the step-size $\beta^{(k)}$ are set to $\frac{0.3}{k^{0.55}}$ and $\frac{3.5}{k^{0.5}}$, respectively.
We employ the root-mean-square error (RMSE), denoted by $\sqrt{\frac{\sum_{i=1}^L||\mathbf{\hat{w}}_i-\mathbf{x}||_2^2}{L}}$, as the performance metric. The results are obtained by averaging the outcomes of $1000$ independent trials.
\begin{figure}[t]
    \centering
    \includegraphics[width=0.441\textwidth]{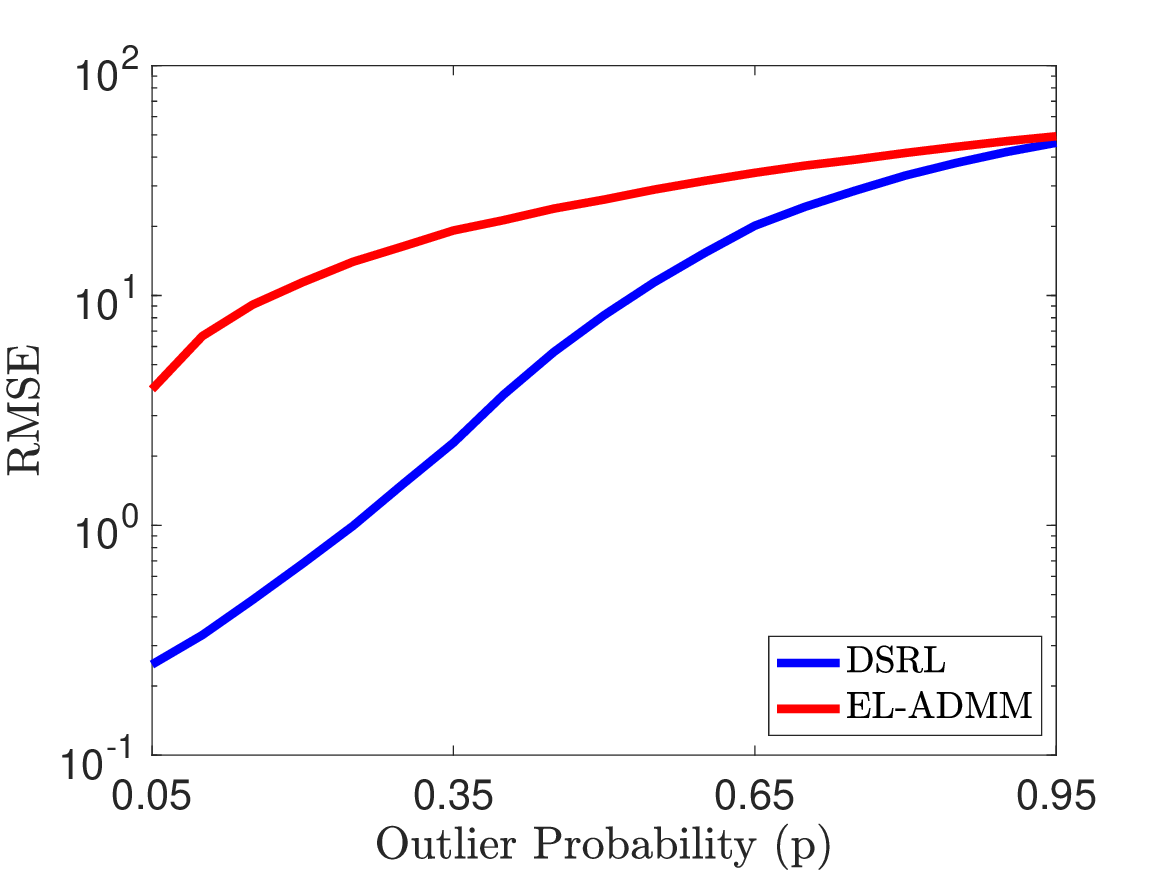}
    \caption{Steady-state RMSE versus the outlier probability.}
    \label{fig1}
\end{figure}

In the first scenario, we focus on comparing the accuracy of the algorithms when handling outliers in the measurements. We introduce outliers at varying probability, denoted as $p$, ranging from $0.05$ to $0.95$ with increments of $0.05$. The outliers, represented as $\epsilon_i$, are drawn from a uniform distribution over the interval $[0, 6\sqrt{3}]$. This is formalized as:
\begin{equation}\label{eq8}
y_i=
\begin{cases}
\epsilon_i, & \text{if } u_i <p,
\\
\|\mathbf{x}-\mathbf{a}_i\|, & \text{otherwise},
\end{cases}
\end{equation}
where $u_i$ is a realization of the uniform distribution on the interval [0,1]. As illustrated in Figure \ref{fig1}, the DSRL algorithm outperforms EL-ADMM consistently across the various frequency of outliers.

\begin{figure}[t]
    \centering
    \includegraphics[width=0.441\textwidth]{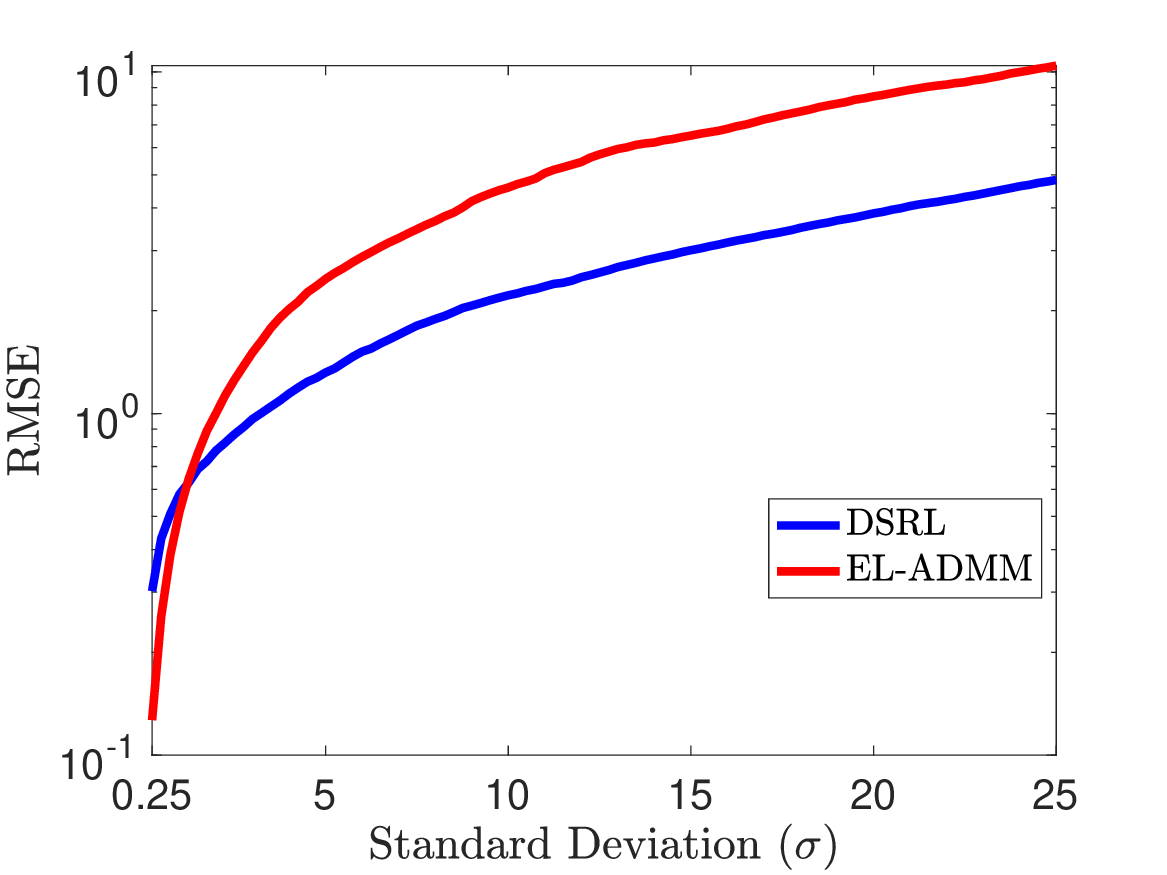}
    \caption{Steady-state RMSE versus the standard deviation of mixture of Laplacian noise.}
    \label{fig2}
\end{figure}

\begin{figure}[t]
    \centering
    \includegraphics[width=0.441\textwidth]{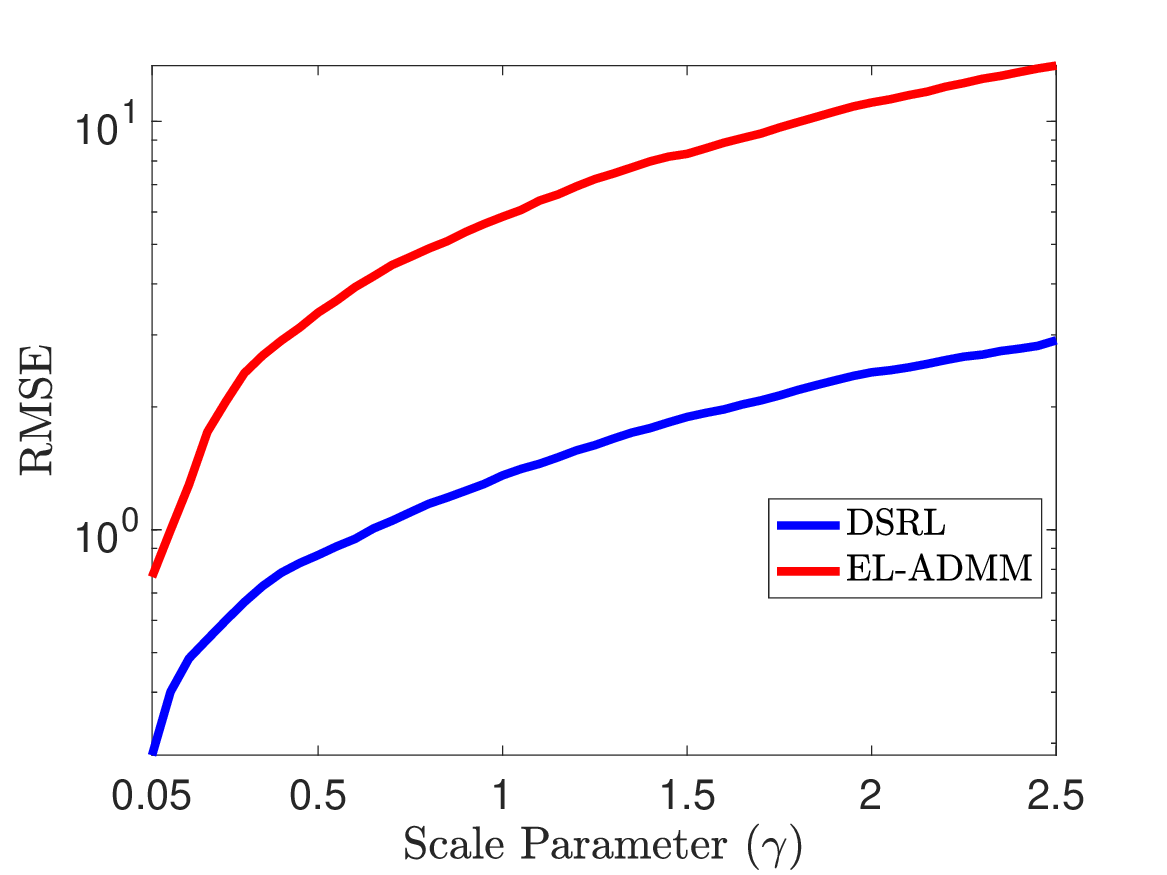}
    \caption{Steady-state RMSE versus scale parameter of Cauchy noise.}
    \label{fig3}
\end{figure}
\begin{figure}[t]
    \centering
    \includegraphics[width=0.441\textwidth]{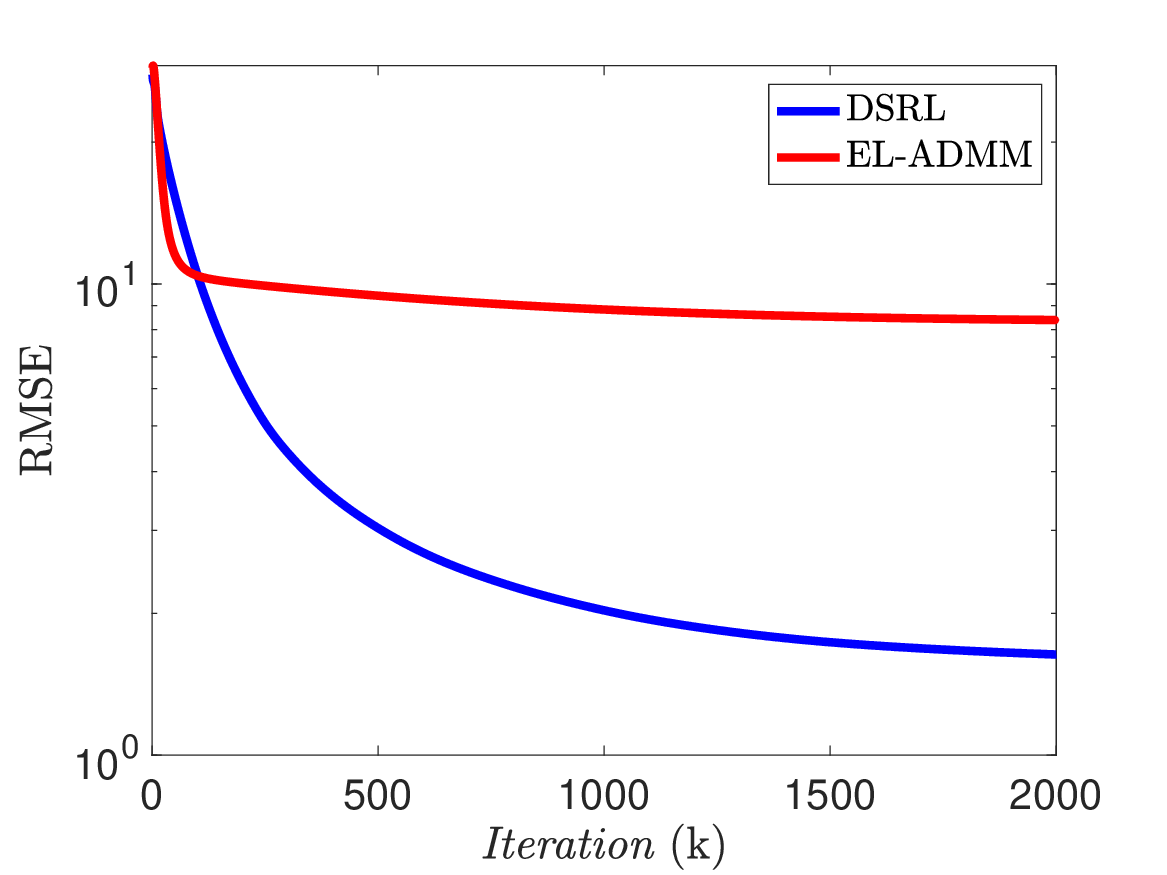}
    \caption{RMSE versus iterations.}
    \label{fig4}
\end{figure}

In the second scenario, we evaluate the performance of the algorithms under noisy conditions, where the noise follows a mixture of Laplacian distributions. The measurements are generated as follows:
\begin{equation}\label{eq9}
y_i = \|\mathbf{x}-\mathbf{a}_i\|+\epsilon_i, \quad \forall i \in {1,\cdots,N},
\end{equation}
where $\{\epsilon_i\}_{i=1}^{N}$ denotes the components of the noise, which are independently and identically sampled from a mixture of Laplacian distributions given by $\sum_{i=1}^{2} c_i \frac{\lambda_i}{2} e^{-\lambda_i|v|}$
Here, $c_1=0.9$, $c_2=0.1$, and $\lambda_2=\frac{\lambda_1}{10}$.  The parameter $\lambda_1$ is related to the standard deviation ($\sigma$) and can be calculated as $\lambda_1 = \sqrt{\frac{\sigma^2}{10.9}}$. The value of $10.9$ in the denominator arises from the specific weights $c_1$ and $c_2$, and it represents the effective contribution of the two components of the Laplacian mixture in relation to the variance. In this setup, $10\%$ of the measurements can be considered outliers. We obtain simulation results for varying standard deviation values, ranging from $0.25$ to $25$, with increments of $0.25$. As illustrated in Figure \ref{fig2}, the DSRL algorithm exhibits superior performance in terms of RMSE, particularly for standard deviation values greater than 1.

In the third scenario, we assess the accuracy of the algorithms in the presence of Cauchy noise. The measurements are formulated as in \eqref{eq9}, where each component $\epsilon_i$ is independently and identically distributed according to the Cauchy distribution with the scale parameter $\gamma$, i.e., $\epsilon_i \sim \text{Cauchy}(0, \gamma)$. We conduct simulations for varying scale parameters, ranging from $0.05$ to $2.5$, in increments of $0.05$. Figure \ref{fig3} illustrates that the DSRL algorithm outperforms EL-ADMM in terms of RMSE across different scale parameters. Additionally, for the scale parameter set at $1$, Figure \ref{fig4} demonstrates that DSRL maintains a competitive convergence speed compared to EL-ADMM.

\section{Conclusion}
This paper presents a sub-gradient algorithm devised for the distributed robust localization problem that directly tackles non-convex and non-smooth objective functions. This proposed algorithm has been rigorously proven to be convergent. Our simulation results highlight the superiority of our algorithm in terms of root-mean-squared error, compared to the existing state-of-the-art EL-ADMM algorithm. Particularly in outlier settings, the algorithm's effectiveness was consistently apparent.

%\newpage

\bibliographystyle{IEEEtran}
\bibliography{strings}

\end{document}